\newcommand{\blue}[1]{\textcolor{black}{#1}}
\newcommand{\be}{\begin{equation}}
\newcommand{\ee}{\end{equation}}
\newcommand{\vect}[1]{\boldsymbol{#1}}
\newcommand{\vphi}{\vect{\phi}}
\newcommand{\Tcal}{\mathcal{T}}
\newcommand{\T}{\mathcal{T}}
\newcommand{\R}{\mathcal{R}}
\newcommand{\E}{\mathbb{E}}
\newcommand{\w}{\vect{w}}
\newcommand{\vpsi}{\vect{\psi}}
\DeclareMathOperator*{\argmax}{arg\,max}
\newcommand{\new}{\mathtt{new}}
\newcommand{\curr}{\mathtt{curr}}
\newcommand{\wnew}{\w^{\new}}
\newcommand{\wcurr}{\w^{\curr}}
\newcommand{\Tnew}{\T^{\new}}
\newcommand{\Tcurr}{\T^{\curr}}
\newcommand\oprocendsymbol{\hbox{$\bullet$}}
\newcommand\oprocend{\relax\ifmmode\else\unskip\hfill\fi\oprocendsymbol}
\newcommand{\bv}{\begin{bmatrix}}
\newcommand{\ev}{\end{bmatrix}}
\theoremstyle{definition}
\newtheorem{problem}{Problem}
\newtheorem{theorem}{Theorem}
\newtheorem{definition}{Definition}
\newtheorem{proposition}{Proposition}
\newtheorem{observation}{Observation}
\newtheorem{lemma}{Lemma}
\theoremstyle{remark}
\newtheorem{remark}[theorem]{Remark}
\begin{document}
%
\title{Learning Submodular Objectives for Team Environmental Monitoring
}
%
%
%

\author{Nils Wilde, Armin Sadeghi, and Stephen L.~Smith%
\thanks{Manuscript received: September, 10, 2021; Revised November, 27, 2021; Accepted November, 29, 2021.}
\thanks{ This paper was recommended for publication by
Editor Jens Kober upon evaluation of the Associate Editor and Reviewers’
comments.
This work was supported by the Natural Sciences and Engineering Research Council of Canada (NSERC)} 
\thanks{The authors are with the Department of Electrical and Computer Engineering, University of Waterloo, Waterloo, ON, Canada. \texttt{$\{$nwilde,a6sadegh,stephen.smith$\}$@uwaterloo.ca} }%
\thanks{Digital Object Identifier (DOI): see top of this page.}%
}

%
%

\markboth{IEEE Robotics and Automation Letters. Preprint Version. Accepted November, 29, 2021}
{Wilde \MakeLowercase{\textit{et al.}}: Learning Submodular Objectives for Team Environmental Monitoring} 

%



\maketitle

\begin{abstract}
In this paper, we study the well-known team orienteering problem where a fleet of robots collects rewards by visiting locations. 
Usually, the rewards are assumed to be known to the robots; however, in applications such as environmental monitoring or scene reconstruction, the rewards are often subjective and specifying them is challenging.
We propose a framework to learn the unknown preferences of the user by presenting alternative solutions to them, and the user provides a ranking on the proposed alternative solutions. We consider the two cases for the user: 1) a deterministic user which provides the optimal ranking for the alternative solutions, and 2) a noisy user which provides the optimal ranking according to an unknown probability distribution. For the deterministic user we propose a framework to minimize a bound on the maximum deviation from the optimal solution, namely regret. We adapt the approach to capture the noisy user and minimize the expected regret. Finally, we demonstrate the importance of learning user preferences and the performance of the proposed methods in an extensive set of experimental results using real world datasets for environmental monitoring problems.

\end{abstract}

\begin{IEEEkeywords}
Incremental Learning, Multi-Robot Systems, Environment Monitoring and Management
\end{IEEEkeywords}

%
\IEEEpeerreviewmaketitle

\setcounter{secnumdepth}{2}
\section{Introduction}

Autonomous multi-robots systems find wide-spread acceptance in an increasing number of applications such as persistent monitoring, environmental data collection, shared autonomy and scene reconstruction. A key challenge remains the design of frameworks that allow users who are not robotic experts to deploy them effectively and efficiently.

We study a generalized version of the well known Team-Orienteering Problem (TOP) \cite{OP_survey} where a fleet of robot has to visit multiple locations in the environment. Upon visit, the respective robot collects a reward and the objective is to maximize the total reward collected by the fleet, subject to constraints on the robots' maximum travel distance. 
Multiple variants have been studied, including uncertainty \cite{TOP_cover_uncertain_reward, jorgensen2018team}, and complex reward functions modelling correlations \cite{OP_correlated_Rus} and diminishing returns \cite{submodular_cost_constrained}.

In some applications, such as servicing tasks or delivery, the reward is directly given, e.g., as a monetary value.
However, in other applications the reward can be difficult to quantify and might be user dependent. For example, in environmental monitoring, scientists may have differing opinions on the importance of gaining information in a certain region.
Often the user can indicate regions of interest that the robots should visit. Yet, defining numerical values for a reward function to prioritize between regions is challenging. This is further enhanced when the reward exhibits a diminishing return property: Additional visits of the same region have decreasing additional value. 
Thus, defining reward functions becomes impractical, especially when the user is not a robotics expert.

In human-robot interaction (HRI) the problem of defining reward functions is known as \emph{reward design}.
To reduce the complexity and thus enable a broader range of users to deploy autonomous robots, researchers have studied different frameworks for \emph{reward learning} \cite{dragan_implicitchoice, dragan_orig, IRL_apprentice_learning, sadigh2019, user_study_paper, IROS2020paper, reward_learn_critiques, imitation_Niekum, korein2018multi,jain2015learning, wilson2012bayesian, shah2020interactive}. In contrast to designing parameters of a reward function, users interact with the robot via modalities such as demonstrations, corrections, critique, or choice feedback.

We apply learning from choice to enable users to specify complex submodular reward functions for GTOP and present new solution techniques that are able to handle the high number of dimensions often encountered in these problems.
In our framework the robot fleet is given a set of areas of interest. Over multiple iterations, the user is (virtually) presented with two different sets of tours for the robot fleet; they then choose the preferred option. Using a finite set of submodular basis functions, the user's choice allows the robot fleet to estimate the user reward function.
\begin{figure}[t]
		\centering
		\begin{subfigure}[b]{0.24\textwidth}
            \centering
            \includegraphics[angle=-90, width=\textwidth]{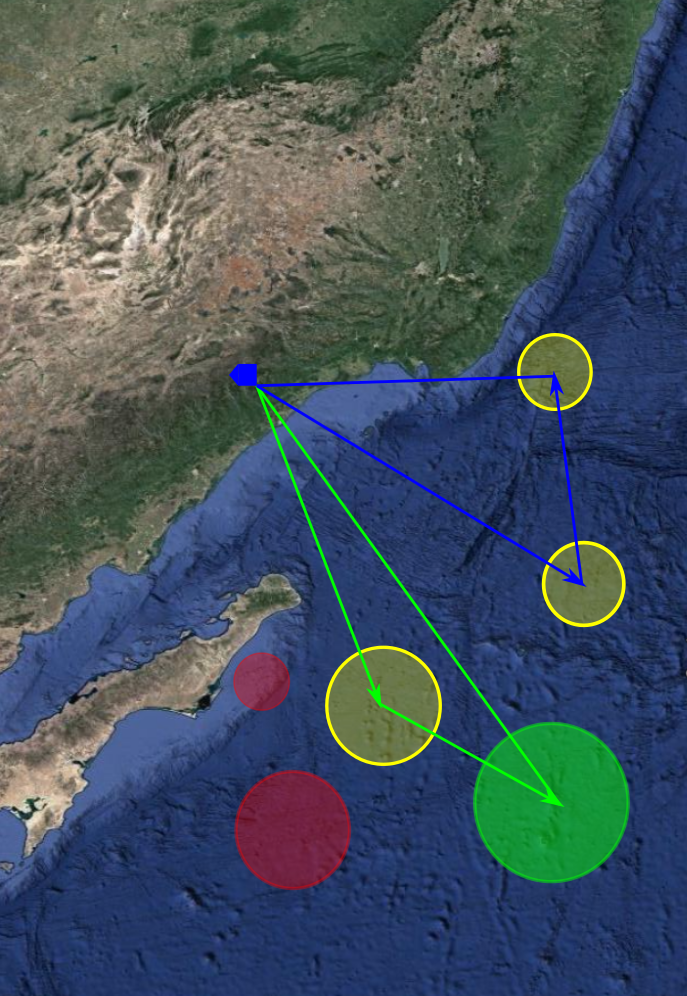}
            \caption{}
        \end{subfigure}%
        \hfill
        \begin{subfigure}[b]{0.24\textwidth}
            \centering
            \includegraphics[angle=-90, width=\textwidth]{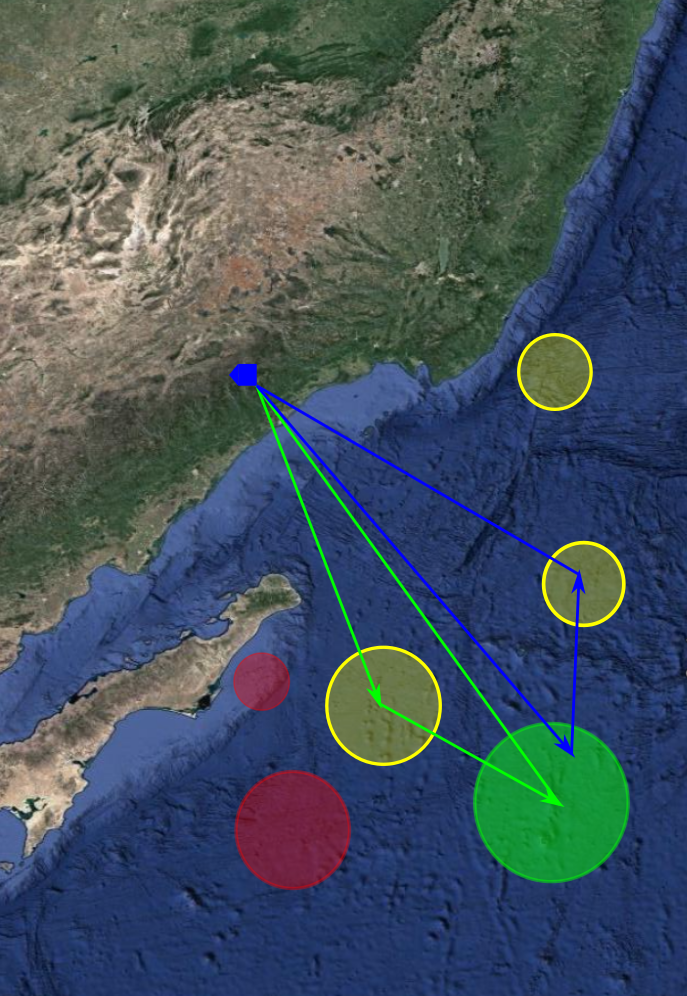}
            \caption{}
        \end{subfigure}%
		
		\caption{Tours for a fleet of two robots for different reward functions with high (green) and low (red) rewards assigned to regions. (a) shows a scenario the robots are not aware of the user preferences over the regions and prioritize the regions equally. In (b) the robots have learned an estimate of the user preference over the regions and identified that re-visiting the green region is more valuable.}
		\label{fig:intro_example}
\end{figure} 
Figure \ref{fig:intro_example} illustrates an example for environmental data collection. The regions of interest are protected areas along a coastline. Without designing or learning a reward function, the robots prioritize them equally (a). Learning from choice feedback allows the robot fleet to identify which regions are most relevant to the user and finds better tours (b).

\textbf{\emph{Contributions:}} In this paper we make the following contributions:
(1) We design submodular basis functions to describe rewards for the generalized team orienteering problem. 
(2) We propose a novel heuristic policy for active preference learning that can handle a high number of basis functions.
(3) To handle uncertainty in the user feedback, we present a novel framework that casts this probabilistic problem to a distribution over instances with noiseless feedback, allowing for efficient learning under uncertainty.
(4) Finally, we demonstrate the practicality of the approach in simulation using real-world locations for environmental data collection.

\textbf{\emph{Related Work:}} 
We address the challenge of defining reward functions for generalized team-orienteering problems and propose an interactive learning framework.
Similarly, researchers in HRI study the design of interactive frameworks that allow inexperienced users to define reward functions for autonomous robots in a wide range of applications.
Since classical approaches such as learning from demonstrations are not always suitable, alternative modes of interaction including corrections, proxy rewards, critique, and choice have been developed  \cite{dragan_implicitchoice}. This work is based on \emph{learning from choice} (sometimes also referred to as active preference learning), where a user iteratively chooses between two presented options \cite{dragan_orig, holladay2016active, wilson2012bayesian, sadigh2019, IROS2020paper, user_study_paper}. Similar to existing work, we pose the problem as learning weights in a linear reward function.
We make novel contributions to address challenges arising from the high dimensionality often found in multi-robot problems. Existing approaches usually rely on sampling potential solutions as well as weights for the reward function \cite{dragan_orig, sadigh2019}. We study how the min-max regret technique from \cite{IROS2020paper} can be extended so it can be used without any samples in a noiseless setting. Further, to handle noisy user feedback we propose a method to cast such feedback to multiple noiseless instances and solve the problem on them.

We focus on a generalized version of the team orienteering problem (TOP) \cite{OP_survey}, which is NP-hard. 
Using basis functions we consider variances of TOP where reward functions can be correlated between vertices, as well as have a diminishing return, i.e., are submodular \cite{nemhauser_submod}.
The authors of \cite{OP_correlated_Rus} study OPs with correlated rewards and give a Mixed-Integer-Quadratic-Program solution, which can also be applied to the multi-robot case.
For single-robot OP with submodular rewards, the authors of \cite{submodular_cost_constrained} provide a constant factor approximation algorithm. The authors of \cite{TOP_sequential_approx} propose an approximation algorithm for TOP by sequentially solving single OPs for each robot. We combine the latter two techniques to obtain a constant factor approximation for submodular TOP.

Stochastic variants of orienteering problems include uncertainty on edge weights \cite{jorgensen2018team, adapt_OP_stoch_traveltime}, time to service a location \cite{OP_stochastic_jobtime}, and rewards \cite{TOP_cover_uncertain_reward}. Similar to the latter case, we study the problem where the robot fleet does not know the rewards.
In \cite{OP_IP_env_monitoring} the rewards of an orienteering problem are dynamic and depend on measurements taken at previous locations. In \cite{TOP_cover_uncertain_reward} robots learn the reward function by iteratively executing tours and use these observations to improve future tours. In contrast, our framework allows robots to learn the reward function by querying a user; the user then does not assign a reward to a single set of tours but instead chooses the preferred set of tours among two presented options. Another difference to \cite{TOP_cover_uncertain_reward} is that our framework can be used as an offline method, where the user is shown tours virtually allowing robots to learn the reward function prior to execution.

Potential applications of the proposed framework include persistent monitoring \cite{smith2011persistent, asghar2019multi}, environmental data collection \cite{OP_IP_env_monitoring}, and scene reconstruction \cite{sadeghi2019minimum}. 
The authors of \cite{HRI_marine_data_Hollinger, HRI_marine_data_Hollinger_ranking} propose an interactive framework for marine data collection: Users define desired targets for observation, the robot then proposes alternatives based on additional information about risks in the environment. Similar to our work, users then choose between different options. This allows the robot to learn the user's utility function trading off reward and risk.

\section{Problem Formulation}

Consider a set of $m$ robots collecting information in an environment represented by a graph $G = (V, E, l)$. The set $V$ is the set of vertices, $E$ is the set of edges between the vertices, and $l:E \rightarrow \mathbb{R}_{\geq0}$ assigns costs to the edges of the graph. A tour $T$ is a sequence of vertices $\langle v_0, v_1, \ldots, v_n, v_0\rangle$. Given a depot location $s \in V$, a tour starts at $s$, i.e., $v_0 = s$. The reward function $R : 2^V \rightarrow \mathbb{R}_{+}$ assigns a reward to each set of vertices, i.e., the reward of visiting the vertices in a tour $T$ denoted by $V(T)$. With a slight abuse of notation we write the reward $R(V(T))$ simply as $R(T)$.

\begin{problem}[Orienteering Problem]
\label{prob:SOP}
Given a graph $G = (V, E, l)$, a reward function $R:2^V \rightarrow \mathbb{R}_{\geq 0}$ and a positive number $B$, find a tour $T$ of length at most $B$ that maximizes the reward collected $R(T)$. 
\end{problem}

We are interested in monotone, normalized submodular reward functions~\cite{nemhauser_submod}. Such a function has the following properties: i) $R(\emptyset) = 0$, ii) $R(A) \leq R(B)$ for every $A \subseteq B \subseteq V$, and iii) $R(A \cup \{v\}) - R(A) \geq R(B \cup \{v\}) - R(B)$  for every $A \subseteq B \subseteq V$ and for every $v \in V$. Now we introduce a generalization of Problem~\ref{prob:SOP} where there are $m$ heterogeneous robots maximizing a submodular reward function.

\begin{problem}[Generalized Team Orienteering Problem (GTOP)]
\label{prob:GTOP}
Consider a graph $G = (V, E, l)$, a fleet of $m$ robots with travel budgets $B_1, \ldots, B_m$, a partition of the vertices into $m$ subsets $V_1, \ldots, V_m$, and a submodular reward function $R: 2^V \rightarrow \mathbb{R}_{\geq 0}$. Find a set of $m$ tours $\mathcal{T} = \{T_1, \ldots, T_m\}$ maximizing the total reward collected  subject to the constraints that $T_i$ for all $i \in \{1, \ldots, m\}$ only visits vertices in $V_i$ and has length at most $B_i$, i.e.,

\be
\begin{aligned}
    \max_{\mathcal{T}} &\quad R(\cup_{i = 1}^{m}V(T_i))\\
     \text{subject to:} & \quad \ell(T_i) \leq B_i,  V(T_i) \subseteq V_i\;\; \forall  i \in \{1, \ldots, m\}\\
\end{aligned}
\label{eq:GTOP}
\ee
\end{problem}

\begin{remark}[Comments on Problem Formulation]
The set $V_i$ contains a vertex for each location that robot $i$ can visit.  Each vertex is contained in just one set. Thus, if multiple robots can visit the same location, then each has a corresponding vertex in their set $V_i$.  The advantage of this representation is that each vertex encodes both the location and the robot performing a visit, and thus a submodular function can be defined directly over sets of vertices.  This is in contrast to other formulations of the submodular team orienteering problem~\cite{jorgensen2018team, xu2020approximation} where the submodular function is defined over the set of all tours/paths, which is exponential in the number of vertices.~\oprocend

\end{remark}

\paragraph{GTOP with Unknown Reward Function}
In this work we consider the case where the reward function $R$ is unknown to the robot. We denote the hidden optimal reward function as $R^*$ and the corresponding GTOP solution $\Tcal^*$. 
Further, let $\hat{R}$ be a robot's estimate of the reward function and $\hat{T}$ the corresponding optimal tour; we are interested in finding an estimate $\hat{R}$ of the reward function with corresponding optimal tours $\hat{\Tcal}=\{\hat{T_1}, \dots, \hat{T_m}\}$ that solves
\be
  \begin{aligned}
       \max_{\hat{\Tcal}} \quad 
       R^*(\hat{\Tcal}) \quad
     \text{subject to:} \ \ell(\hat{T_i}) \leq B_i \text{ for all } \hat{T_i} \in \hat{\T}.
  \end{aligned}
  \label{eq:objective_estimated_reward}
\ee
We notice that this is an ill-posed problem as the reward function $R^*$ is not available to the robot. However, we consider a framework where the robot iteratively interacts with the user, allowing it to make observations about the user's hidden reward function. This is known as \emph{reward learning}, where the robot presents the user with one or multiple possible solutions to it's task and then obtains feedback in the form of corrections, choice, labels, or others~\cite{dragan_implicitchoice}. We define a query as a set of possible solutions for the GTOP $Q=\{\T_1, \T_2, \dots, \T_l\}$. 
Let $f(R)$ be some prior belief over the set of all possible reward functions  $\R$. Given feedback $U$, the robot can compute a posterior $f(R|(Q,U))$. We can express the expected outcome with respect to the prior $\E_{U\sim f(R)}[R|(Q,U)]$. This framework allows us to state our problem as an adaptive stochastic optimization problem:

\begin{problem}[Learning GTOP Rewards]
Given $G=(V,E,l)$, a hidden reward function $R^*$, a fleet of $m$ robots with travel budgets $B_1,\dots, B_m$; find a sequence of $K$ queries $(Q_1, Q_2, \dots, Q_K)$ such that the expected estimated reward function $\hat{R}=\E_{U_k\sim f(R)}[R|(Q_1,U_1), \dots, (Q_K, U_K)]$ and the corresponding sets of $m$ tours $\hat{\Tcal}$ solves~\eqref{eq:objective_estimated_reward}.

\label{prob:learn_problem}
\end{problem}

\section{Team Orienteering Problem with submodular basis functions}
\label{sec:TOP_basis_fn}
In this section, we present a linear approximation of a submodular reward function, then we propose an approximation algorithm for the GTOP for the linearized reward function.
\subsection{Basis functions}
We consider the reward function $R(\T)$ to be be composed of a set of basis functions $r_1, \dots, r_n:2^V\rightarrow \mathbb{R}_{\geq0}$. Given tours $\T$, the reward function is then a weighted sum of the basis functions $R(\T, \w) = \sum_{i=1}^n w_i r_i(\T)$.

Similar approaches are commonly used in reward learning problems \cite{dragan_implicitchoice, dragan_orig, sadigh2019, IROS2020paper,user_study_paper}, where basis functions are usually referred to as \emph{features}.
Given that $r_i$ depend only on the vertices, we can assume without loss of generality that each basis function is characterized by a subset $W_i\subseteq V$. That is, for any $W_i$, let $\psi_i(\T)$ be a count of how many vertices of the tours $\T$ lie in $W_i$, then $r_i(\T)$ is a functional of $\psi_i(\T)$.

The subsets $W_1,\dots, W_n$ can reflect a spatial relation between vertices, i.e., describe neighborhoods, but can also express other features, such as grouping all vertices where the robots can make certain observations. The basic case $r_i(\T)=\psi_i(\T)$ is a modular function describing the number of times subset $W_i$ is visited.
However, many real-world problems exhibit a diminishing return property \cite{submod_marine, submod_sensor_Krause}. In order for $r_i$ to be growing \emph{submodularly} with $\psi_i$, we choose
\be
\label{eq:gamma_decay}
r_i(\T)= \sum_{\alpha=1}^{\psi_i(\T)} \gamma^{(\alpha-1)},
\ee
where $\gamma\in(0,1]$. If $\gamma=1$ we recover the modular case; in the other extreme that $\gamma\to0$, visiting $W_i$ more than once effectively does not yield a larger reward than the first visit.

Using these basis functions, the problem of learning the user reward function $R^*$ becomes one of learning the weights $\w^* = (w^*_1, \ldots, w^*_n)$, i.e., the importance of each basis function, as well as the decay parameters $\gamma^*_1,\dots, \gamma^*_n$ describing the diminishing return. Unfortunately, the proposed reward function is only linear in the weights $\w$, but not in the decays. Therefore, we assume that $\gamma$ comes from a discrete set $\Gamma$.
For each subset $W_i$ we define $|\Gamma|$ basis functions $r_{i,j}$ for $j=1, \dots, {|\Gamma|}$, using the different values $\gamma_j$.
Using this discretization the overall reward function becomes
\be
\blue{
R(\T, \w) = \sum_{i=1}^{n}  \sum_{j = 1}^{|\Gamma|} w_{i,j} r_{i,j}(\mathcal{T}).
}
\label{eq:basis_fn}
\ee
For a sparse notation let $\vphi=\bv r_1, r_2, \ldots\ev$, allowing us to write $R(\T, \w)=\vphi(\T)\cdot\w$. Further, for any given weight $\w$, let $\T(\w)$ denote the set of tours maximizing the reward, i.e., $\mathcal{T}(\w) = \argmax_{\T} R(\T, \w)$. Consequently, $\vphi(\T)$ denotes the features of the tour $\T$.
\begin{observation}[Submodularity]
The  reward function $R(\T, \w)$ proposed in Equation~\eqref{eq:basis_fn} is a normalized, monotone and submodular set function.
\end{observation}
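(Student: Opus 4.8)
The plan is to exploit two standard facts about submodular set functions (see, e.g.,~\cite{nemhauser_submod}): (i) the class of normalized, monotone, submodular functions is closed under nonnegative linear combinations, and (ii) the composition of a nondecreasing, ``discretely concave'' scalar function vanishing at $0$ with a nonnegative modular set function is again normalized, monotone and submodular. Since $R(\T,\w)=\sum_{i=1}^{n}\sum_{j=1}^{|\Gamma|} w_{i,j} r_{i,j}(\T)$ with $w_{i,j}\ge 0$, by (i) it suffices to prove that each individual basis function $r_{i,j}$, viewed as a function of the visited vertex set $V(\T)$, is normalized, monotone and submodular, and then sum over $i$ and $j$.

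First I would observe that, for a fixed subset $W_i\subseteq V$, the counting function $\psi_i(\T)=|V(\T)\cap W_i|=\sum_{v\in V(\T)}\mathbb{1}[v\in W_i]$ is modular and nonnegative with $\psi_i(\emptyset)=0$; in particular, its marginal gain on adding a vertex $v$ to any set $S\subseteq V$ equals the constant $c_v:=\mathbb{1}[v\in W_i]\in\{0,1\}$, independent of $S$. Next I would write $r_{i,j}=g_j\circ\psi_i$, where $g_j:\mathbb{Z}_{\ge 0}\to\mathbb{R}_{\ge 0}$ is given by $g_j(k)=\sum_{\alpha=1}^{k}\gamma_j^{\alpha-1}$ (equivalently $g_j(k)=\frac{1-\gamma_j^{k}}{1-\gamma_j}$ for $\gamma_j\in(0,1)$, and $g_j(k)=k$ for $\gamma_j=1$). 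The properties I need from $g_j$ are: $g_j(0)=0$; $g_j$ is nondecreasing; and $g_j$ has nonincreasing increments, since $g_j(k+1)-g_j(k)=\gamma_j^{k}$ is nonincreasing in $k$ for $\gamma_j\in(0,1]$ — this is exactly the discrete analogue of concavity needed on the integer range of $\psi_i$.

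The three required properties then follow directly. Normalization: $r_{i,j}(\emptyset)=g_j(\psi_i(\emptyset))=g_j(0)=0$. Monotonicity: for $S\subseteq T$ we have $\psi_i(S)\le\psi_i(T)$, hence $r_{i,j}(S)=g_j(\psi_i(S))\le g_j(\psi_i(T))=r_{i,j}(T)$ because $g_j$ is nondecreasing. Submodularity: for $S\subseteq T\subseteq V$ and $v\notin T$, the marginal gains are $r_{i,j}(S\cup\{v\})-r_{i,j}(S)=g_j(\psi_i(S)+c_v)-g_j(\psi_i(S))$ and similarly with $T$; since $\psi_i(S)\le\psi_i(T)$ and $g_j$ has nonincreasing increments, taking the same step $c_v\in\{0,1\}$ from the lower base $\psi_i(S)$ produces at least as large an increment as from $\psi_i(T)$, giving $r_{i,j}(S\cup\{v\})-r_{i,j}(S)\ge r_{i,j}(T\cup\{v\})-r_{i,j}(T)$. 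Closure under nonnegative combinations over all $i\le n$, $j\le|\Gamma|$ then yields the claim for $R(\T,\w)$.

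I do not expect a genuine obstacle here; the only care needed is bookkeeping. One must keep in mind that the reward is a function of the visited vertex set $V(\T)$ (so $\psi_i$ is a bona fide set function and re-visits are not double-counted, while distinct-robot visits to a shared location are already distinct vertices by the partition $V_1,\dots,V_m$), and one should run the composition argument on the integer domain of $\psi_i$ rather than over $\mathbb{R}$, which is why I state concavity as the nonincreasing-increments condition. The boundary case $\gamma_j=1$, where $r_{i,j}=\psi_i$ is itself modular, is then immediate.
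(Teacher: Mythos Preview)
Your proposal is correct and follows essentially the same line as the paper's own proof: both reduce to showing that each basis function $r_{i,j}$ is normalized, monotone and submodular by observing that its marginal increment when adding a vertex in $W_i$ is $\gamma_j^{\psi_i(\cdot)}$, which is nonincreasing in $\psi_i$, and zero otherwise. The only cosmetic difference is that you package the argument via the standard ``nondecreasing discretely concave function composed with a modular function'' lemma and then invoke closure under nonnegative linear combinations, whereas the paper verifies the diminishing-marginals inequality for $R$ directly by a short case split on $v\in W_i$ versus $v\notin W_i$.
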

\begin{proof}
Since $\psi_i(\emptyset)=0$ for all $i$, we have $R(\emptyset, \w)=0$; hence, the function is normalized. Further $V(\T')\supseteq V(\T)$ implies $\psi_i(\T')\geq\psi_i(\T)$; adding an additional vertex can only increase the vertex count of any set $W_i$. Hence, $\sum_{\alpha=1}^{\psi_i(\T')} \gamma^{(\alpha-1)}_j\geq \sum_{\alpha=1}^{\psi_i(\T)} \gamma^{(\alpha-1)}_j$ for any $V(\T')\supseteq V(\T)$ and any $\gamma_j\in(0,1]$, making $R$ monotonically increasing.
Finally, consider any vertex $v$ and two sets of tours $\T$ and $\T'$ where $V(\T')\supseteq V(\T)$. Then $\psi_i(\T')\geq \psi_i(\T)$ for all $i$. 
If $v\in W_i$, the marginal gain is $\Delta(\T,v)=r_{i,j}(\T\cup v)- r_{i,j}(\T)=\gamma_j^{\psi_i(\T)+1}$ and $\Delta(\T',v)=r_{i,j}(\T'\cup v)- r_{i,j}(\T')=\gamma_j^{\psi_i(\T')+1}$. Since $\gamma_j\in(0,1]$ and $\psi_i(\T')\geq \psi_i(\T)$, we have $\gamma_j^{\psi_i(\T')+1}\leq \gamma_j^{\psi_i(\T)+1}$, and thus, $\Delta(\T,v)\geq \Delta(\T',v)$.
On the other hand if $v\notin W_i$ then $\psi_i(\T'\cup v)- \psi_i(\T') = \psi_i(\T\cup v)- \psi_i(\T) = 0$ and $\Delta(\T,v)= \Delta(\T',v) = 0$, i.e., adding $v$ does not change the vertex count $\psi_i$ and thus the value of the basis function $r_{i,j}$. 
Since this holds for all $W_i$, we obtain $R(\T'\cup v)- R(\T') \leq R(\T\cup v)- R(\T)$ and $R$ is submodular.
\end{proof}

\subsection{Solving the GTOP for a Given Set of Weights}
In~\cite{submodular_cost_constrained}, authors provide a bi-criterion approximation algorithm for the orienteering problem with submodular rewards, and in~\cite{TOP_sequential_approx} propose an approximation algorithm to extend the results of the orienteering problem to the team orienteering problem. In the rest of the paper, for a given $\w$, we combine these two approaches to achieve a bi-criterion approximation algorithm for the team orienteering problem with submodular reward functions. Algorithm~\ref{alg:approx_alg} shows the proposed approach for the GTOP problem. The algorithm sequentially solves the submodular orienteering problem with the algorithm proposed in~\cite{submodular_cost_constrained} \blue{(line 3)}. Then after iteration $k$, $\psi_i$ is incremented by the number of vertices $T_k$ visits in $W_i$ \blue{(line 5)}. At each iteration, the $\textsc{SingleOP}$ implements the proposed bi-criterion approximation algorithm in~\cite{submodular_cost_constrained} with approximation factor $\eta = 2(1 - \frac{1}{e})^{-1}$ on the collected rewards. Hence, by Theorem 1 in~\cite{TOP_sequential_approx}, Algorithm~\ref{alg:approx_alg} is a $1 + \eta$ approximation algorithm for the GTOP problem.

\begin{algorithm}[t]	
	\DontPrintSemicolon 
	\KwIn{$G=(V,E,l)$, start $s\in V$, \#robots $m$, budget $B$, weights $\w$}
	\KwOut{Tours $\T$}		
	Initialize $\vpsi=\vect{0}$, 
	$\T \leftarrow \emptyset$\\
	\For{$k=1$ to $m$} {
		$T_k \leftarrow \textsc{SingleOP}(G, s, \w, \vpsi)$\label{algline:single_op}\\
		\For{$i=1$ to $n$} {
		    $\psi_i \leftarrow \psi_i + |\blue{W_i} \cap V(T_k)|$\\
		}
		$\T \leftarrow \T \cup \{T_k\}$\\
	}
	\Return{$\T$}
	\caption{Bi-criterion approximation for GTOP}
	\label{alg:approx_alg}
\end{algorithm}

\section{Learning rewards from choice feedback}
One framework for learning reward functions via user interaction that found widespread attention in HRI in recent years is \emph{learning from choice}. Iteratively, the robots present the user with two alternative solutions to some robot planning problems. The user then chooses the preferred option. The user is assumed to make that choice based on some hidden reward function which allows the robot to infer the parameters of that reward function.

\subsection{Deterministic user feedback}
We begin by posing our problem for a deterministic user, whose feedback always follows the assumed cost function. 
Consider that two sets of tours $\T^1$ and $\T^2$ are proposed to the user, and the user indicates their preference. 

\begin{definition}[Deterministic user model]
Given two sets of tours $\T^1$ and $\T^2$, a deterministic user always prefers the tours with larger reward with respect to the hidden user weights $\w^*$. Let $I\in\{1,2\}$ denote the user feedback.Then
\be
R(\T^1,\w^*)\geq R(\T^2,\w^*)
\iff I = 1.
\ee
\end{definition}

\emph{Learning Cuts:} Without loss of generality, assume that the user prefers $\T^1$ (we can simply reassign the labels after observing the choice), therefore we have, $ \vphi(\T^1)\cdot\w -  \vphi(\T^2)\cdot\w \geq 0$. We refer to this inequality as a \emph{cut}. Let $P(c_{1:k})$ denote the polyhedron constructed by the cuts $\{c_1, \ldots, c_k\}$. Now we define a \emph{valid cut} as follows:

\begin{definition}[Valid Cut]
Given a  polyhedron $P(c_{1:k})$, a cut $c_{k + 1}$ is valid if the intersection of the cut and $P(c_{1:k})$ has dimension greater than zero.
\end{definition}

Note that each valid cut partitions the space of valid rewards $\w$, therefore by adding a valid cut at each step we monotonically decrease the set of valid rewards. Now assume that we have a tours $\T^1$ in hand, we want to construct the set $\T^2$ such that the tours in $\T^2$ satisfy the budget constraints and the cut constructed by comparing $\T^1$ and $\T^2$ is valid.

\begin{lemma}
\label{lem:valid_cut}
Given two set of tours $\T^1$ and $\T^2$ and a set of prior cuts $\{c_1, \ldots, c_k\}$, the cut constructed by comparing $\T^1$ and $\T^2$ is valid if and only if the solutions to the following  problems are greater than zero:
\begin{align*}
    \max_{\w} &\ \vphi(\T^1)\w - \vphi(\T^2)\w \quad
    \text{subject to:} \ \w \in P(c_{1:k}),
\end{align*}
\begin{align*}
    \max_{\w} &\ \vphi(\T^2)\w - \vphi(\T^1)\w \quad
    \text{subject to:} \ \w \in P(c_{1:k}).
\end{align*}
\end{lemma}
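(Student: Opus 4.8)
The plan is to read the two programs as computing $M_1 := \max_{\w\in P(c_{1:k})} g(\w)$ and $M_2 := \max_{\w\in P(c_{1:k})} \big(-g(\w)\big) = -\min_{\w\in P(c_{1:k})} g(\w)$, where $g(\w) := \big(\vphi(\T^1)-\vphi(\T^2)\big)\cdot\w$ is the linear functional whose zero set is the cut hyperplane $H$. The statement to prove is then the geometric fact that $H$ meets $P(c_{1:k})$ in a set of positive dimension exactly when $g$ takes both a strictly positive and a strictly negative value on $P(c_{1:k})$, i.e.\ when $M_1>0$ and $M_2>0$. Before starting I would record two harmless reductions: $\vphi(\T^1)\neq\vphi(\T^2)$ (otherwise the two tour sets are indistinguishable under every $\w$, $g\equiv 0$, and comparing them carries no information, so the resulting ``cut'' is by convention not valid); and it suffices to argue inside the affine hull of $P(c_{1:k})$, so that ``valid'' is read as ``$H$ passes through the relative interior of $P(c_{1:k})$'', equivalently ``$\dim\big(H\cap P(c_{1:k})\big)=\dim P(c_{1:k})-1$'' — which, since the relevant polyhedron is at least two-dimensional in all cases of interest, is the same as the stated condition $\dim\big(H\cap P(c_{1:k})\big)>0$.

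For the ``if'' direction — the one used afterwards to synthesize $\T^2$ — suppose $M_1>0$ and $M_2>0$, so there are $\w^{+},\w^{-}\in P(c_{1:k})$ with $g(\w^{+})>0>g(\w^{-})$. Using continuity of $g$ and density of $\operatorname{relint}P(c_{1:k})$ in $P(c_{1:k})$, I may perturb $\w^{+},\w^{-}$ slightly so that both lie in $\operatorname{relint}P(c_{1:k})$ while keeping the strict sign conditions; the open segment between them then stays in $\operatorname{relint}P(c_{1:k})$, and since $g$ is linear and changes sign along it there is a point $\w^{0}$ on the segment with $g(\w^{0})=0$. Thus $\w^{0}\in H\cap\operatorname{relint}P(c_{1:k})$, a nonempty relatively open subset of $H$ (nonempty, and $g$ is non-constant on $\operatorname{aff}P(c_{1:k})$ because it attains two signs there, so $H\cap\operatorname{aff}P(c_{1:k})$ has dimension $\dim P(c_{1:k})-1$). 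Hence $\dim\big(H\cap P(c_{1:k})\big)=\dim P(c_{1:k})-1\ge 1$, so the cut is valid; the same two witnesses also give $P(c_{1:k})\cap\{g\ge 0\}\subsetneq P(c_{1:k})$ and $P(c_{1:k})\cap\{g\le 0\}\subsetneq P(c_{1:k})$, i.e.\ the ``strict shrinkage regardless of the user's reply'' property invoked just before the lemma.

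For the ``only if'' direction I would contrapose: if, say, $M_1\le 0$ then $g(\w)\le 0$ for all $\w\in P(c_{1:k})$, so $H$ is a supporting hyperplane of $P(c_{1:k})$ and $H\cap P(c_{1:k})$ is a proper exposed face, hence of dimension strictly below $\dim P(c_{1:k})$; in particular one of the two post-feedback polyhedra $P(c_{1:k})\cap\{g\ge 0\}$, $P(c_{1:k})\cap\{g\le 0\}$ coincides with $P(c_{1:k})$ and the cut does not genuinely bisect it, so it is not valid, and the case $M_2\le 0$ is symmetric. The step I expect to require the most care — and the main obstacle to a fully rigorous write-up — is the borderline case $M_1=0$ (or $M_2=0$): there $H$ can still meet $P(c_{1:k})$ in a face of positive dimension, so the literal reading ``$\dim(H\cap P)>0$'' does not by itself fail, and one must lean on the intended meaning of ``valid'' (the hyperplane crosses the relative interior, equivalently splits $P(c_{1:k})$ into two pieces of full dimension) to make the equivalence with $M_1>0\wedge M_2>0$ exact. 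Once ``valid'' is pinned down that way, and the degenerate case $\vphi(\T^1)=\vphi(\T^2)$ is set aside as non-informative, the two directions above close the lemma.
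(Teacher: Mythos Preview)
Your argument is correct and, in fact, more careful than the paper's. The paper disposes of the ``if'' direction in one word (``trivial'') and for the ``only if'' direction proceeds directly rather than by contraposition: it asserts that a valid cut intersects the interior of $P(c_{1:k})$, picks a point $\w_0$ in that intersection, and moves a small amount $\delta_1,\delta_2>0$ along the normal direction $\vect{d}$ of the cut hyperplane to exhibit witnesses $\w_0+\delta_1\vect{d}$ and $\w_0-\delta_2\vect{d}$ inside $P(c_{1:k})$ on which the two linear objectives are strictly positive. Your route --- IVT on a segment between perturbed sign witnesses for the ``if'' direction, and a supporting-hyperplane/exposed-face argument for the contrapositive of ``only if'' --- reaches the same conclusion with somewhat more machinery but also more precision.

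You also put your finger on something the paper glides over: the literal Definition of ``valid'' (the intersection has dimension greater than zero) is not quite equivalent to ``the hyperplane meets the relative interior,'' and the borderline case $M_1=0$ or $M_2=0$ can produce a positive-dimensional face contained in $H$ without $H$ actually bisecting $P(c_{1:k})$. The paper's proof implicitly reads ``valid'' as ``intersects the interior,'' which is the operationally intended meaning (and is what the surrounding text about strictly shrinking the feasible set requires). Your explicit reduction --- working in the affine hull and interpreting validity as $H$ crossing $\operatorname{relint}P(c_{1:k})$ --- is the right way to make the equivalence airtight, and your handling of the degenerate case $\vphi(\T^1)=\vphi(\T^2)$ is likewise a clarification the paper omits.
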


\begin{proof}
The first part is trivial. Now assume that the cut is valid, then the cut intersects the interior of $P(c_{1:k})$. Let $\vect{d}$ be the vector normal to the line defined by the cut, then there exists a $\w_0$, $\delta_1$ and $\delta_2$ such that $\w_0 + \delta_1 \vect{d}\in P(c_{1:k})$ and  $\w_0 - \delta_2 \vect{d}\in P(c_{1:k})$. Therefore, the solution to the two problems are greater than zero.
\end{proof}

In essence, Lemma~\ref{lem:valid_cut} states that for a valid cut $(\T^1,\T^2)$ there must exist some $\w$ in the current polyhedron $P(c_{1:k})$ for which $\T^1$ has a higher reward than $\T^2$, and vice versa.
\blue{In other words, the hyperplane defining a valid cut passes through the interior of the current polyhedron $P(c_{1:k})$.}

\emph{Query generation:} The main challenge in active preference learning is to iteratively generate valid cuts that allow for efficient learning. 

Related work in HRI is usually based on heuristic solutions that greedily optimize some auxiliary function $h$ to maximize the expected learning benefit of presenting two solutions $(\T^1, \T^2)$. 
Recent approaches include $h$ capturing the volume of the probability space over weights \cite{dragan_orig}, the information entropy \cite{sadigh2019} or the maximum regret \cite{IROS2020paper}.

These optimizations are usually difficult on two different levels: Computing $h$ often poses a hard problem and the potential solutions require solving some robot planning problem, making this a nested optimization. 
Most solution techniques rely on sampling candidates solutions, as well as approximating $h$ using sampled weights.
While this might be suitable for low-dimensional applications, the number of basis functions for team orienteering problems of our problem is $O(n|\Gamma|)$. Thus, accurately approximating information entropy requires a prohibitively large number of samples.

We design a novel query generation method that does not require any form of sampling.
Similar to \cite{user_study_paper} we choose a variation of learning from choice in which one of the two presented options comes from the previous iteration: 
At iteration $k$, let $\Tcurr$ be the tours the user preferred in the previous iteration. We now need to find only one new set of tours $\Tnew$ such that observing feedback to $(\Tcurr, \Tnew)$ yields a valid cut with respect to $\{c_1, \ldots, c_k\}$.

To find $\Tnew$ given the previous cuts $\{c_1,\dots,c_k\}$ and $\Tcurr$, we adapt the maximum regret approach proposed in \cite{IROS2020paper}. Regret measures how suboptimal the solution of estimated parameters ${\w'}$ is. In the GTOP, this is captured by the reward of some tours $\T'$, evaluated by the users true reward function $\w^*$ compared against the user-optimal solution $\T^*$, evaluated by $\w^*$, i.e., 
$
R(\T^*, \w^*) - R(\T', \w^*) 
= \vphi^*\w^* - \vphi'\w^*.
$
Using regret we can find $\Tnew$ by solving
\be
\label{eq:max_regret}
\begin{aligned}
   \max_{\w^{\new}} &\ \vphi(\T(\w^{\new}))\w^{\new} - \vphi(\Tcurr)\w^{\new}\\
\text{subject to:}&\ \w^{\new} \in P(c_{1:k-1}).  
\end{aligned}
\ee

That is, given the current solution, we seek to find $\Tnew$ such that if $\Tnew$ was optimal, the current solution would be most suboptimal, i.e., have maximum regret. If the user chooses $\Tcurr$, the weight $\wnew$ becomes infeasible thus $(\wcurr,\wnew)$ will no longer be the maximizer for the updated polyhedron $P(c_{1:k})$ -- we greedily reduce the upper bound on the error. On the other hand, if the user chooses $\Tnew$, we improve the current solution. \blue{
This formulation makes two major changes to the max regret approach in \cite{IROS2020paper}:
1) we fix one set of tours to be shown to be $\Tcurr$, and 2) we use the difference instead of a ratio in the definition of regret.} \blue{
The following proposition ensures that an algorithm that iteratively solves \eqref{eq:max_regret}  and then updates the polyhedron given the user feedback will eventually find an optimal solution, i.e., a weight $\wcurr$ where $R(\T(\wcurr)) = R(\T(\w^*))$.}

\begin{proposition}
If the optimal solution to Problem~\eqref{eq:max_regret} is not a valid cut, then the reward collected by $\T^{\mathrm{curr}}$ is optimal. 
\end{proposition}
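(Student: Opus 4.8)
The plan is to argue by contraposition on the structure of the maximum-regret program~\eqref{eq:max_regret}. Suppose the optimal solution $\wnew$ of~\eqref{eq:max_regret} does \emph{not} yield a valid cut when $\Tnew = \T(\wnew)$ is compared against $\Tcurr$. The cut in question is $\vphi(\Tnew)\cdot\w - \vphi(\Tcurr)\cdot\w \ge 0$. By Lemma~\ref{lem:valid_cut}, invalidity means that at least one of the two linear programs there has optimal value $0$ (it cannot be negative, since $\wnew$ itself is feasible for the first one and attains the objective value equal to the optimum of~\eqref{eq:max_regret}, which is $\ge 0$ because $\T(\wnew)$ is reward-maximal for $\wnew$ while $\Tcurr$ is merely feasible). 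So the relevant case is that the optimum of the first LP, $\max_{\w\in P(c_{1:k-1})} \vphi(\Tnew)\w - \vphi(\Tcurr)\w$, equals $0$; equivalently, the optimal value of~\eqref{eq:max_regret} itself is $0$.

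The next step is to interpret ``optimal value of~\eqref{eq:max_regret} equals $0$.'' For every feasible weight $\w\in P(c_{1:k-1})$ we have $\vphi(\T(\w))\cdot\w - \vphi(\Tcurr)\cdot\w \le 0$. But $\T(\w)$ is by definition the reward-maximizing set of tours under $\w$, so $\vphi(\T(\w))\cdot\w \ge \vphi(\Tcurr)\cdot\w$ always holds, forcing $\vphi(\T(\w))\cdot\w = \vphi(\Tcurr)\cdot\w$ for \emph{every} $\w$ in the current polyhedron. Crucially, the true weight $\w^*$ lies in $P(c_{1:k-1})$, since every cut added so far was generated from the user's actual (deterministic, hence consistent) feedback and therefore is satisfied by $\w^*$. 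Applying the identity at $\w = \w^*$ gives $R(\Tcurr, \w^*) = \vphi(\Tcurr)\cdot\w^* = \vphi(\T(\w^*))\cdot\w^* = R(\T(\w^*), \w^*) = R(\T^*, \w^*)$, i.e., $\Tcurr$ collects optimal reward.

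I would then close by noting the one technical point that needs care: the argument tacitly uses that $\Tcurr$ is itself feasible and in fact was the preferred option in a previous query, so it is a genuine candidate solution of the GTOP (its tours respect the budgets), which is exactly what makes $\vphi(\T(\w))\w \ge \vphi(\Tcurr)\w$ legitimate for all $\w$. The main obstacle — and the place to be precise — is the claim that the optimum of~\eqref{eq:max_regret} is nonnegative and that ``not a valid cut'' collapses to ``this optimum is $0$'': one must rule out the degenerate possibility that validity fails only because the *reverse* inequality's LP is the one attaining $0$ while the forward value is strictly positive; but a strictly positive forward optimum, attained at some $\w$, combined with $\wnew$ feasible, shows the hyperplane meets the interior, contradicting invalidity. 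Hence invalidity genuinely forces the forward optimum to vanish, and the conclusion follows.
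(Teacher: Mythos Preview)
Your argument is essentially the same as the paper's: both use that $\w^*\in P(c_{1:k-1})$ so the true regret $\vphi(\T(\w^*))\w^*-\vphi(\Tcurr)\w^*$ is bounded by the optimal value of~\eqref{eq:max_regret}, and both invoke Lemma~\ref{lem:valid_cut} to conclude that invalidity forces this optimal value to be~$0$. Your write-up is in fact more careful than the paper's, which simply asserts $\vphi(\T(\wnew))\wnew-\vphi(\Tcurr)\wnew=0$ ``from Lemma~\ref{lem:valid_cut}'' without separating the two LPs.

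One small issue: your final paragraph tries to dispose of the degenerate case where the \emph{reverse} LP is the one that vanishes while the forward one is strictly positive, but the argument you give there does not work. You claim that a strictly positive forward optimum at some $\w$, ``combined with $\wnew$ feasible,'' shows the hyperplane meets the interior of $P(c_{1:k-1})$; however, both that $\w$ and $\wnew$ lie on the \emph{same} side of the hyperplane (indeed $\vphi(\Tnew)\wnew-\vphi(\Tcurr)\wnew$ equals the optimum of~\eqref{eq:max_regret}, which you assumed positive), so you have not exhibited a point on the other side, and the hyperplane could still merely support $P(c_{1:k-1})$ at a vertex. The paper's proof does not address this edge case either, so your argument is correct to the same extent as the paper's; if you want to close the gap fully you would need an additional structural assumption on $P(c_{1:k-1})$ (e.g., that it contains a point where $\Tcurr$ weakly dominates $\Tnew$, such as the origin) or a direct argument that $\Tnew$ dominating $\Tcurr$ throughout $P$ already gives the desired conclusion in another way.
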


\begin{proof}
Let $\w^{\new}$ be the optimal solution to Problem~\eqref{eq:max_regret}. Since the cut defined by $\Tcurr$ and $\T(\w^{\new})$ is not a valid cut, then we have $\vphi(\T(\w^*))\w^* - \vphi(\Tcurr)\w^* \leq \vphi(\T(\w^{\new}))\w^{\new} - \vphi(\Tcurr)\w^{\new} = 0$,
where the first inequality comes from $\w^* \in P(c_{1: k - 1})$ and the second equality comes from  Lemma~\ref{lem:valid_cut}. Therefore, $\Tcurr$ collects the optimal reward.
\end{proof}

\blue{Now we establish the following result on the complexity of Problem~\eqref{eq:max_regret}.}

\begin{lemma}
\label{lem:complexity}
The problem of finding the tour with maximum regret is NP-hard.
\end{lemma}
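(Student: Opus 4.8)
The plan is to establish NP-hardness of the maximum regret problem in \eqref{eq:max_regret} by a reduction from the orienteering problem itself (Problem~\ref{prob:SOP}), which is known to be NP-hard, or more directly from a classical NP-hard problem such as the Traveling Salesman Problem / Hamiltonian Path. The key observation is that the inner maximization in \eqref{eq:max_regret} hides a call to $\T(\wnew)$, i.e., solving a GTOP to optimality; so even verifying or constructing the optimal regret value requires solving an orienteering-type problem. I would make this precise by exhibiting an instance where the regret objective is maximized precisely when the new tour is an optimal orienteering solution for a carefully chosen weight vector.

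Concretely, first I would reduce from single-robot orienteering with a \emph{modular} reward (the case $\gamma = 1$, so each basis function $r_i(\T) = \psi_i(\T)$ is just an indicator/count), since modular orienteering is already NP-hard. Given an orienteering instance with graph $G$, budget $B$, and vertex rewards, I would build a learning instance with one basis function per vertex (so $W_i = \{v_i\}$), an empty set of prior cuts (or a trivial polyhedron that fixes $\wnew$ to the desired reward vector), and set $\Tcurr$ to be the trivial tour $\langle s, s\rangle$ with $\vphi(\Tcurr) = \vect{0}$. Then \eqref{eq:max_regret} reduces to $\max_{\w^{\new} \in P} \vphi(\T(\w^{\new}))\w^{\new}$, and if $P$ pins $\w^{\new}$ to the orienteering reward vector, the problem becomes exactly $\max_{\T} \vphi(\T)\w^{\new}$ subject to the budget constraint — i.e., the original orienteering problem. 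Hence a polynomial-time algorithm for the maximum regret problem would solve orienteering in polynomial time.

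The second step is to handle the subtlety that $P(c_{1:k-1})$ is meant to be a genuine polyhedron arising from cuts rather than a single point; I would either argue that a single point can be approximated arbitrarily well by a full-dimensional polyhedron (adding a tiny box around the target $\w$), or note that hardness of the general problem follows a fortiori since we have constructed a family of instances — indexed by the polyhedron — on which the problem is hard. If the paper's intended statement is about the \emph{submodular} case with the decay $\gamma \in (0,1)$, the same reduction works verbatim, since a modular instance is a special case ($\gamma=1 \in \Gamma$), or one can reduce from submodular orienteering \cite{submodular_cost_constrained} directly.

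The main obstacle I anticipate is not the reduction's core idea but making the bookkeeping clean: ensuring that the constructed polyhedron $P$ is described by a polynomial number of valid cuts (so the reduction is genuinely polynomial), that the $\Tcurr$ term does not interfere (pinning it to zero reward handles this), and that the optimum of \eqref{eq:max_regret} over $\w^{\new}$ genuinely forces $\w^{\new}$ to the reward vector we want rather than exploiting slack in $P$ to pick a degenerate weight with trivially large regret. Pinning $\w^{\new}$ via cuts of the form $\w \cdot e_i \geq c_i$ and $\w\cdot e_i \leq c_i$ (or their $\vphi(\T^1)-\vphi(\T^2)$ realizations) resolves this, at the cost of the polyhedron being lower-dimensional; I would address that by the perturbation argument above or by observing the decision version is still NP-hard.
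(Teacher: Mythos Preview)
Your proposal is correct and follows essentially the same reduction as the paper: set $\Tcurr$ to the empty tour so that \eqref{eq:max_regret} collapses to $\max_{\wnew \in P} \vphi(\T(\wnew))\wnew$, and then argue that solving this recovers a hard routing problem. The paper's proof differs from yours only in that it reduces from TSP directly and takes $P$ to be the unit cube rather than pinning $\wnew$ to a prescribed reward vector via cuts; since $\vphi(\T)\cdot\w$ is maximized over $[0,1]^n$ at $\w=\vect{1}$, the optimal regret value equals the maximum number of vertices visitable within budget $B$, and checking whether this equals $|V|$ decides the TSP instance. This sidesteps entirely the bookkeeping you flagged as the main obstacle---there is no need to realize a degenerate polyhedron through carefully engineered cuts or to invoke a perturbation argument.
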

\begin{proof}
We show the result by a reduction from the traveling salesman problem (TSP). Given a TSP instance  $G = (V, E, c)$ and a budget $B$, we construct an instance of problem~\eqref{eq:max_regret}. We set the polyhedron $P$ to be unit cube and $T^{curr}$ to be the set of empty tours. Then problem~\eqref{eq:max_regret} becomes $\max_{\w^{\new}}  \vphi(\T(\w^{\new}))\w^{\new}$ and budget $B$ on the tours. Let $\w^*$ be the optimal solution to this problem, then $\T(\w^*)$ where $\vphi(\T(\w^*))\w^* = |V|$ is a valid solution to the TSP. Now note that if there is no solution to the max regret problem collecting $|V|$ reward, then there is no solution to the TSP problem, therefore, the result follows immediately.
\end{proof}

{
We observe the following property of the objective function in~\eqref{eq:max_regret} which will help us provide a bound on it.
}
\begin{lemma}
\label{lem:convexity}
The objective function $\vphi(\T(\w))\w - \vphi(\T')\w$ is a convex function in $\w$ for any set of tours $\T'$.
\end{lemma}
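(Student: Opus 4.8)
The plan is to recognize the objective as a pointwise maximum of linear functions of $\w$. First I would write $\vphi(\T(\w))\w = \max_{\T} \vphi(\T)\w$, where the maximum ranges over all feasible sets of tours $\T$ (those satisfying the budget and partition constraints). This identity is simply the definition of $\T(\w)$ as the reward-maximizing tours for the weight vector $\w$: for the optimal tours we have $\vphi(\T(\w))\w \geq \vphi(\T)\w$ for every other feasible $\T$, and equality is attained at $\T = \T(\w)$.

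Next I would observe that the set of feasible tours does not depend on $\w$ — only the objective value does — so $\{\vphi(\T) : \T \text{ feasible}\}$ is a fixed (finite) set of vectors, independent of $\w$. Hence $\w \mapsto \max_{\T} \vphi(\T)\w$ is the maximum of a finite collection of linear functions of $\w$, and the maximum of (finitely or infinitely many) affine functions is convex. Subtracting the fixed linear term $\vphi(\T')\w$ preserves convexity, since adding an affine function to a convex function yields a convex function. Therefore $\vphi(\T(\w))\w - \vphi(\T')\w = \max_{\T}\big(\vphi(\T) - \vphi(\T')\big)\cdot\w$ is convex in $\w$.

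The only subtlety worth checking is that each basis-function value $r_{i,j}(\T)$ — and hence each coordinate of $\vphi(\T)$ — genuinely depends only on the tour set $\T$ and not on $\w$, which is immediate from the definitions in~\eqref{eq:gamma_decay} and~\eqref{eq:basis_fn}: $r_{i,j}$ is a fixed functional of the vertex count $\psi_i(\T)$. I expect no real obstacle here; the main "work" is just making the max-of-linear-functions representation explicit. One could optionally remark that $\T(\w)$ being an approximate maximizer (as produced by Algorithm~\ref{alg:approx_alg}) does not affect the statement, since convexity concerns the exact optimum $\vphi(\T(\w))\w = \max_\T \vphi(\T)\w$, but this is not needed for the lemma as stated.
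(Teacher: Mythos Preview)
Your proposal is correct and is essentially the same argument as the paper's: both rest on the observation that $\vphi(\T(\w))\w = \max_{\T}\vphi(\T)\w$ is a pointwise maximum of linear functions (the paper writes out the Jensen-type inequality $\vphi(\T(\w))\w^i \leq \vphi(\T(\w^i))\w^i$ explicitly rather than invoking the max-of-affines fact), and both then note that the subtracted term $\vphi(\T')\w$ is linear.
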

\begin{proof}
Consider $\w = \lambda \w^1 + (1 - \lambda) \w^2$ for some $\lambda \in [0, 1]$. Then, 
\begin{align*}
    \vphi(\T)\w &= \vphi(\T(\w)) [\lambda \w^1 + (1 - \lambda) \w^2] \\
    & =  \lambda \vphi(\T(\w))\w^1 + (1 - \lambda) \vphi(\T(\w))\w^2\\
    & \leq \lambda \vphi(\T(\w^1))\w^1 + (1 - \lambda) \vphi(\T(\w^2))\w^2.
\end{align*}
Note that the second term in the objective function is linear in $\w$. Therefore, the result follows immediately.
\end{proof}

While Lemma~\ref{lem:complexity} shows that finding the set of tours maximizing the regret is NP-hard, Lemma \ref{lem:convexity} implies that the optimal solution of \eqref{eq:max_regret} is on a vertex of the polyhedron $P(c_{1:k})$.
We can upper bound that solution with
\begin{align}
\min_{\w^{\new}} \;& \vphi(\Tcurr)\cdot \w^{\new}\label{problem:upper_bound} \quad
\text{subject to:}\ \w^{\new} \in P(c_{1:k}).
\end{align}

In conclusion, at iteration $k$ our min-max regret heuristic proposes two new sets of tours $(\Tcurr,\Tnew)$ where $\Tcurr$ is the solution the user preferred in the previous iteration, and $\Tnew$ is the approximate GTOP solution for $\wnew$ solving \eqref{problem:upper_bound}.

\subsection{Extension to noisy user feedback}
In the previous section, we considered the problem with a deterministic user who always chooses the set of tours with higher reward with respect to $\w^*$.
In practice, this assumption can lead to suboptimal outcomes when the user decision is not accurately captured in the assumed reward function. Thus, we consider the problem with a noisy user where the set of tours chosen by the user is not the set of tours collecting higher rewards. We model the noisy user with the Boltzmann model as follows:

\begin{definition}[Noisy user model]
\label{def:noisy}
Given two sets of tours $\mathcal{T}^1$ and $\mathcal{T}^2$, and a user with hidden rewards $\w$, then the probability that the user chooses $\mathcal{T}^1$ is 
\[
\mathbb{P}(\mathcal{T}^1, \w) 
= \frac{1}{1+\mathrm{exp}(\beta (\phi(\mathcal{T}^2) - \phi(\mathcal{T}^1))\cdot \w)},
\]

where $\beta\blue{ > 0}$ represents the level of expertise of the user.
\end{definition}

The Boltzmann model is widely used in reward learning \cite{dragan_orig, dragan2, sadigh2019, imitation_Niekum} and describes a user whose choice becomes more uncertain when the presented options have a similar reward with respect to $\w^*$.

Now consider a set of cuts $\{c_1, \ldots, c_k\}$ which are results of the preference questions, then the probability that the hidden reward function lies in $P(c_{1:k})$ is 
    $\mathbb{P}(\w^* \in P(c_{1:k})) = \Pi_{i = 1}^{k} \mathbb{P}(c_i)$,
where $\mathbb{P}(c_i)$ is the probability that the user has responded to the $i$th query correctly. We denote the negation of a cut $c_i$ by $\bar{c}_i$ and the probability of it as  $\mathbb{P}(\bar{c}_i) = 1 - \mathbb{P}(c_i)$.

Algorithm~\ref{alg:noisy} shows the proposed algorithm for learning the reward function of the user. In Line~\ref{algline:Q} of the algorithm, we initialize the set of \blue{observed} cuts to $Q = \emptyset$. 

In Line~\ref{algline:ProbableRegions}, function $\textsc{ProbableRegions}(Q, N)$ takes the current set of cuts $Q = \{c_1, \ldots, c_i\}$ and an integer $N$ as input and returns a set of $N$ polyhedrons.
\blue{Each such polyhedron is constructed as follows: We initialize a set of cuts $Q' = Q$. Then for each cut $c_i\in Q'$, we replace $c_i$ with $\bar{c}_i$ with probability $1-\mathbb{P}(c_i)$. This then defines a new polyhedron $P(Q')$. That is, we sample from the set of all $2^K$ possible combinations of cuts or their negations. Thus, the probability of sampling a set of cuts $Q'$ and thus a polyhedron $P(Q')$ is $\mathbb{P}(\w^* \in P(Q'))$.
Considering multiple polyhedrons allows us to take into account inconsistency in the answers by the user.}

For each of the constructed polyhedrons, function  $\textsc{MaximumRegret}$ solves Problem~\eqref{problem:upper_bound}.
We generate a set of tours for each of the rewards as a candidate sets of tours for the next preference questions. Finally, in Line~\ref{algline:discounted_regret} we evaluate the maximum regret for each polyhedron and discount it by their probabilities. 
The level of expertise for the user and the reward function are not known\blue{, however observe that by Definition~\ref{def:noisy} we have $\mathbb{P}(c_i) > \nicefrac{1}{2}$. Therefore,} we approximate the probabilities of regions as a monotonically decreasing function of the number of negated cuts in the construction of the polyhedron.
Finally, the sets of tours with the highest discounted regret is presented to the user as a new query.    

\begin{algorithm}[t]	
	\DontPrintSemicolon 
	\KwIn{graph $G=(V,E,l)$, start $s\in V$, fleet size $m$, budget $B$, sample budget $N$}
	\KwOut{Tours $\T$}		
	Initialize $\w=\vect{1}$, $Q \leftarrow \emptyset$ \label{algline:Q}\\
	$\T= \textsc{GTOP}(G, \w)$\\
	\For{$i=k$ to $K$} {
		$\{P_1, \ldots, P_N\} \leftarrow  \textsc{ProbableRegions}(Q, N)$\label{algline:ProbableRegions}\\
		\For{$P_j \in \{P_1, \ldots, P_N\}$}
		{
		$\w^j \leftarrow \textsc{MaximumRegret}(G, \T, P^j)$\\
	    $\T^j \leftarrow \textsc{GTOP}(G, \w^j)$\\
		}
	$\T_{\mathrm{new}} \leftarrow \underset{\T^j}{\argmax} \ \mathbb{P}(\w^* \in P_j)(\vphi(\T^j) - \vphi(\T)) \cdot \w^j$\label{algline:discounted_regret}\\
	$\T, c_k \leftarrow \textsc{UserResponse}(\T, \Tnew)$\\
	$Q \leftarrow Q \cup \{c_k\}$\\
	}
	\Return{$\T$}   
	\caption{Learning GTOP Rewards}
	\label{alg:noisy}
\end{algorithm}

\section{Evaluation}
\label{sec:evaluation}
We evaluate the performance in environmental monitoring missions using real-world and randomly generated scenarios.
\begin{figure*}[t]
		\centering
		\begin{subfigure}[b]{0.29\textwidth}
            \centering            \includegraphics[width=\textwidth]{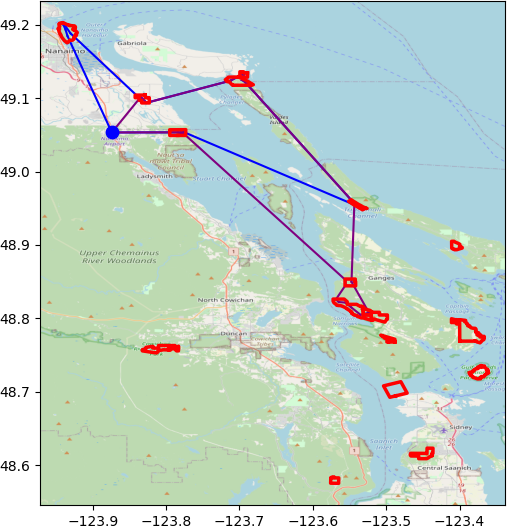}
        \end{subfigure}%
        \hfill
        \begin{subfigure}[b]{0.29\textwidth}
            \centering
            \includegraphics[width=\textwidth]{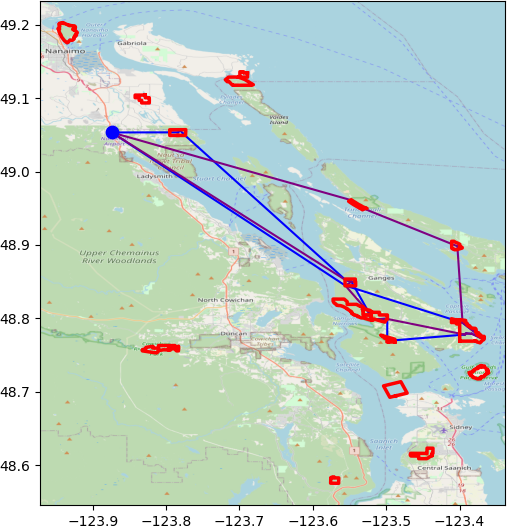}
        \end{subfigure}%
        \hfill
        \begin{subfigure}[b]{0.29\textwidth}
            \centering
            \includegraphics[width=\textwidth]{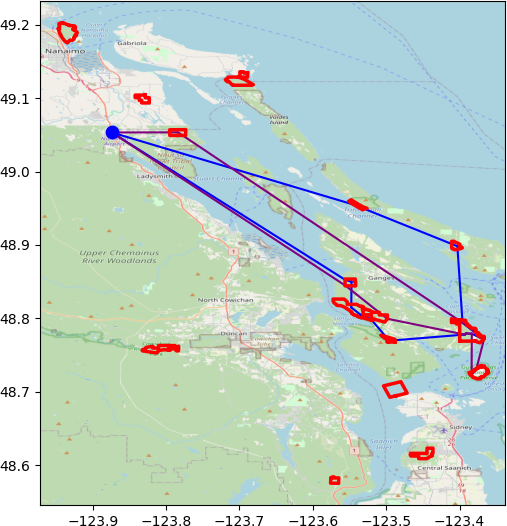}
        \end{subfigure}%
		
		\caption{Example tours from a common depot (blue dot) for the environmental monitoring task. The left figure shows the initial solution before learning with $\w=\vect{1}$. The middle figure shows the tours that were learned after $20$ iterations, while the right figure shows the approximately optimal tours $\T^*$.}
		\label{fig:geo_data_example}
\end{figure*} 
\subsection{Experiment Setup}

In the experiments, the robot fleet is given a set of regions of interest, but no information on how valuable it is to collect environmental data in each region. The objective is to learn a reward function describing which regions are to be prioritized when battery life does not allow to visit all of them.

\paragraph{Basis functions}
We define basis functions that capture the visits to each of the regions of interest.
In each region, we randomly place $1$ to $5$ vertices. For each set $W_i$ of vertices in a single region we define three different basis functions as in \eqref{eq:gamma_decay} for decay parameters $\gamma\in\{.001, .5, 1\}$. That is, the user reward for each region can follow a step function, a curved submodular function, or a linear function.

\paragraph{User design}
Drawing user weights for the basis functions uniformly random does not yield relevant problem instances: the initial solution $\T=\mathtt{GTOP(G,\w=\vect{1}})$ is often already close to the optimal solution.

Thus, we design a probability function for how a user places weights on these basis functions.
First, we model the user weight $w_i$ as a function of the distance of the region $V_i$ from the depot. In detail, the weight is a Gaussian random variable  $w_i=\mathcal{N}(d(s, V_i)^2, \sigma)$ where $d(s, V_i)$ is the distance from the start vertex to the mean location of $V_i$, and the variance $\sigma$ is a design parameter
A second parameter $\vect{p}$ describes a probability over the different values of $\gamma$---for each region the user ``picks'' only one of the three decay parameters. Thus, the vector $\vect{p}=\bv p_1,p_2,p_3\ev$ where $p_1+p_2+p_3=1$ describes the probabilities for $\gamma$ taking values $\{.001, .5, 1\}$, respectively.

We simulate users with values $\sigma\in\{.5, 10\}$ representing scenarios with moderate and almost no correlation of reward and distance, and distributions over decay functions $\vect{p} \in \{[\nicefrac{1}{3}, \nicefrac{1}{3}, \nicefrac{1}{3}], [.7, .2, .1] , [.1, .2, .7], [.2, .7, .1] \}$ where the weights represent a bias towards step, linear and submodular functions, respectively.
\blue{Thus, our simulated users vary in the structure of what regions are important to them, as well as in the type of observations they are interested in, i.e., repeated observations of the same region, or rather covering more regions with fewer or even just one observation.}
For all $8$ \blue{different parameter settings}, we choose $\beta = 20$. This results in users choosing the set of tours with higher rewards in $84\%$ of uniformly random queries.

We consider all robots start at the same depot and have a budget equal to twice the distance from the depot to the furthest region. In the implementation of Algorithm \ref{alg:noisy}, we use a simple static probability $\mathbb{P}(c_i)=0.8$ and $N = 10$ $\textsc{ProbableRegions}$. We measure how well Algorithm \ref{alg:noisy} learns the weights $\w^*$ of the user reward function, i.e., solves Problem \ref{prob:learn_problem}. Let $\T$ be the set of tours returned by Algorithm \ref{alg:noisy}, the reward  ratio is
$
R(\T,\w^*) /R(\T^*,\w^*),
$
where $\w^*$ are the hidden user weights, and $\T^*$ is the corresponding approximate GTOP solution using Algorithm~\ref{alg:approx_alg}.
We notice that $\T^*$ is only an approximation to the optimum; thus, the ratio can be larger than $1$.

\subsection{Baselines}
We compare the proposed maximum regret heuristic against two classes of baselines. 

\paragraph{Richer user input}
The first class of baselines consists of non-learning approaches. Instead, we consider that the user provides more information about their reward function to the robot. In the proposed framework, the user only identifies regions of interest such as the protected areas in Figure \ref{fig:intro_example}. Without any learning, the robot fleet assumes equal reward for all regions, and that the decay can be either linear, submodular, or a step function. This constitutes the initial solution of our algorithm.
With increasing complexity of user input we consider: $\mathtt{Decay}$ -- users do not provide numerical reward values for the regions, but indicate what decay function each region has, $\mathtt{Ranking+Decay}$ -- the user gives a ranking of the importance of each region, and the decay function, $\mathtt{Reward}$ -- the user provides the exact rewards for the region, but not the decay, and finally $\mathtt{Reward+Decay}$ where the user provides the rewards and the decay, which is equivalent to providing $\w^*$. The latter two methods are effectively reward designs, which require a very high level of expertise from the user and thus are often impractical.

\emph{Competing active learning methods:}
The second class of baselines consists of competing approaches for the active query generation.
The first is $\mathtt{Random\, Uniform}$ where we replace line 9 of Algorithm \ref{alg:noisy} with computing $\Tnew$ for some uniformly randomly sampled weight $\wnew$. 
A second method $\mathtt{Random\, Posterior}$ samples based on previously observed user feedback. We replace the direction $\vphi(\Tcurr)$ of the objective function in \eqref{problem:upper_bound} with some uniformly randomly sampled direction $\vect{d}\in [-1,1]^n$, and set $N=1$ in line 5 of Algorithm \ref{alg:noisy}, i.e., generate only one probable region $P$.

The third query generation method $\mathtt{Information\, Gain}$ adapts the information entropy approach proposed in \cite{sadigh2019}. However, the original algorithm is unsuccessful in our problem: It returns the expected weight, which is often rendered $0$ as the weight samples do insufficiently cover the high dimensional space. Thus, we adapted the entropy approach to also follow our framework where we show the previously preferred option $\Tcurr$ again. To find the second set of tours, we compute a set of candidates $\T_1,\dots, \T_k$: We execute lines 5 to 8 of our Algorithm \ref{alg:noisy}, but replace line 7 with a linear program optimizing in a random direction, similar to $\mathtt{Random\, Posterior}$. We then select the best candidate by solving Equation (4) from \cite{sadigh2019}, using $M=300$ weight samples. Each sample is drawn uniformly random from a sampled polyhedron returned by $\textsc{ProbableRegions}(Q, 1)$.

\subsection{Results}
\begin{figure}[h]
		\centering
		\begin{subfigure}[b]{0.49\textwidth}
            \centering
            \includegraphics[width=.9\textwidth]{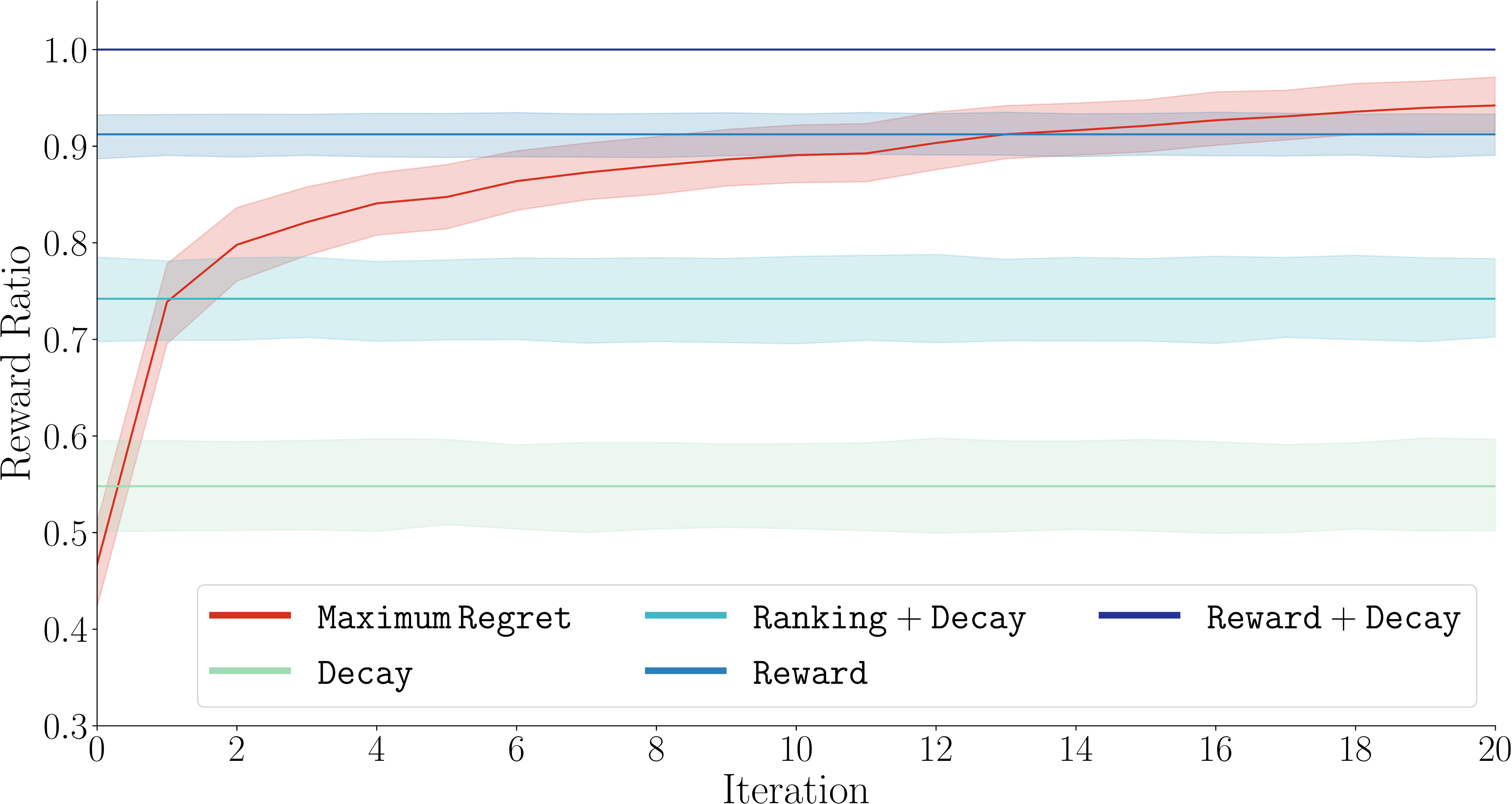}
            \caption{Comparison with richer user input}
    		\label{fig:GeoRegion_GTOP_baselines}
        \end{subfigure}
        
        \begin{subfigure}[b]{0.49\textwidth}
            \centering
            \includegraphics[width=.9\textwidth]{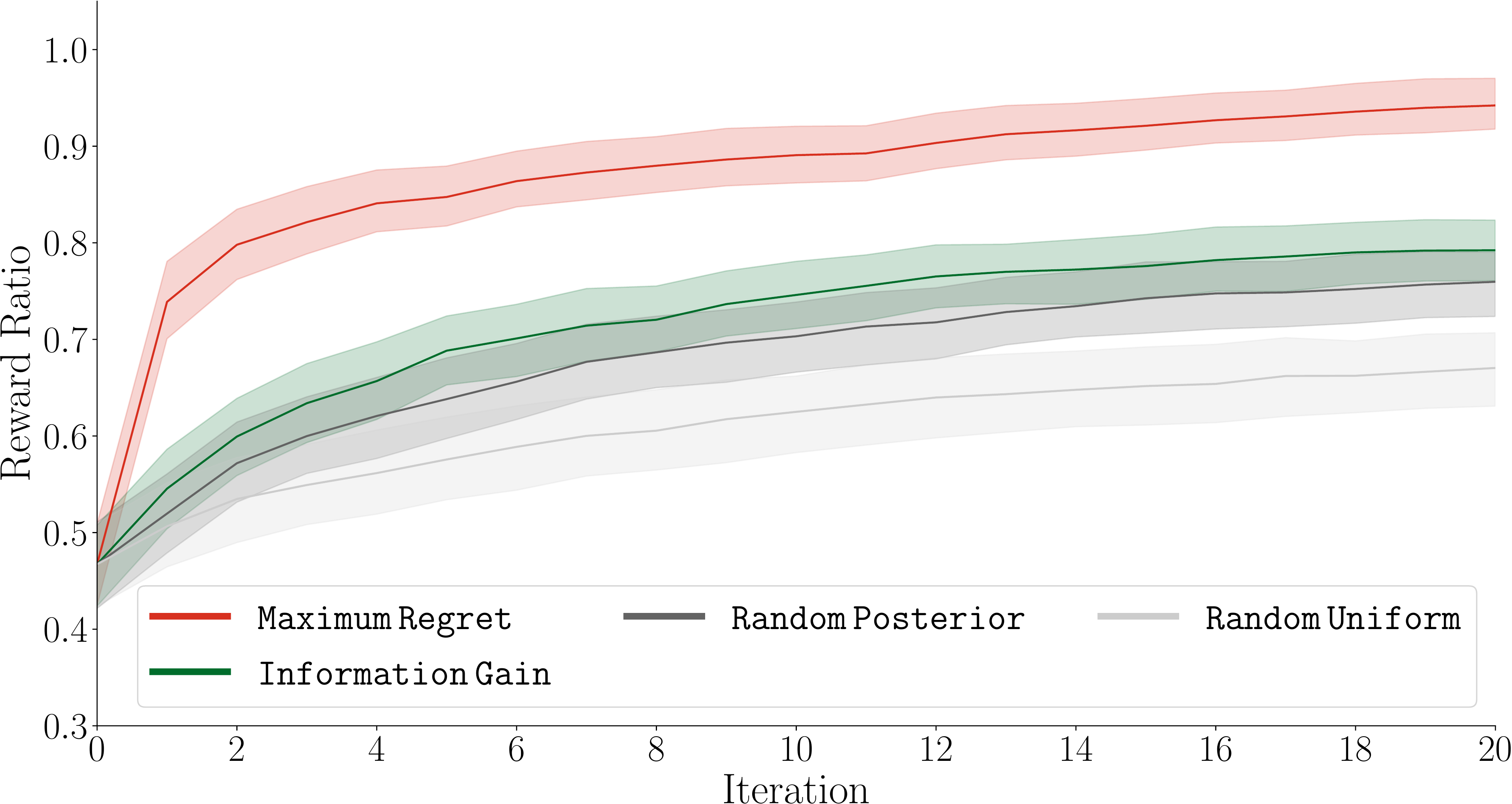}
            \caption{Comparison with different learning methods.}
    		\label{fig:GeoRegion_GTOP_learners}
        \end{subfigure}
        
		\caption{Summarized results for the real world experiment comparing the proposed approach against levels of a-priori user input (a), and against other active preference learning methods (b).}
		\label{fig:Geo_Region_GTOP}
\end{figure}

\paragraph{Real world environment with two robots}
In the first experiment, we use part of the World Database on Protected Areas (WDPA)\footnote{Dataset from https://data.unep-wcmc.org/datasets/12}, illustrated in Figure \ref{fig:geo_data_example}.
The problem contains $17$ regions, and for each of the $8$ user types we simulated 25 trials. 

Figure \ref{fig:Geo_Region_GTOP} shows the comparison of our proposed method against the two baseline classes. 
In Figure \ref{fig:GeoRegion_GTOP_baselines}  we observe that $\mathtt{Maximum\, Regret}$ drastically improves the reward ratio within the first few iterations, finding tours as good as $\mathtt{Ranking+Decay}$, where the user would provide a noiseless ranking of all regions and identify the decay function. After {$12$} iterations the learning collects as much reward as $\mathtt{Reward}$ and approaches $\mathtt{Reward+Decay}$ (i.e., the ground truth) after $20$ iterations. This showcases that the learning approach is able to find tours equally good to those that require much more complex user input when not using preference learning. Furthermore, after $20$ iterations the learned set of tours collects $95\%$ of the reward of the approximately optimal tours.
Figure \ref{fig:GeoRegion_GTOP_learners} shows that $\mathtt{Maximum\, Regret}$ collects the most reward after $20$ iterations ($95\%$ compared to $82\%$ for $\mathtt{Information\, Gain}$, $75\%$ for $\mathtt{Random\, Posterior}$ and $68\%$ for $\mathtt{Random\, Uniform}$). Moreover, there is a strong difference in the learning speed: After only $2$ iterations $\mathtt{Maximum\, Regret}$ already achieves $82\%$, matching the result for $\mathtt{Information\, Gain}$ after $18$ iterations.

\paragraph{Synthetic environment with 4 robots}
To assert that the real-world experiment is representative of a wider range of problem instances, we consider a synthetic experiment where the regions of interest are randomly generated. 
We construct additional problem instances by sampling $20$ regions of interest, each offering $1$ to $5$ observations points. 
We observed similar performance on the synthetic instances: After $10$ iterations the proposed method collects as much reward as $\mathtt{Ranking+Decay}$ and $\mathtt{Reward}$ ($81\%$). After $20$ iterations our method achieves a reward ratio of $89\%$ compared to $72\%$ for
$\mathtt{Random\, Posterior}$, $70\%$ for $\mathtt{Information\, Gain}$, and $68\%$ for $\mathtt{Random\, Uniform}$.

In summary, the proposed method finds high-quality tours for the data collection task that, when no learning was used, would require much richer user input; and our method outperforms other learning approaches in terms of collected reward and learning speed.

\section{Discussion}
\label{sec:discussion}

This paper considers the problem of reward collection by a team of robots with a hidden submodular reward function. First, we presented a linear approximation of the submodular reward function. Second, we proposed an approximation algorithm when the weights on the linear approximation is known. Finally, we proposed a framework to learn the underlying hidden user reward function for deterministic and noisy users from choice feedback. The experimental results on real-world data show that the proposed framework provides near-optimal tours after a few iterations of user feedback. For future work, we consider investigating the effectiveness of the proposed method in data collection missions in the field.

\bibliographystyle{IEEEtran}

\end{document}